\newtheorem{theorem}{Theorem}
\begin{document}

\title{Adaptive Budgeted Multi-Armed Bandits for IoT with Dynamic Resource Constraints}

\makeatletter
\newcommand{\linebreakand}{%
  \end{@IEEEauthorhalign}
  \hfill\mbox{}\par
  \mbox{}\hfill\begin{@IEEEauthorhalign}
}
\makeatother

\author{\IEEEauthorblockN{Shubham Vaishnav, Praveen Kumar Donta, and Sindri Magnússon}\\[-0.05em]
\IEEEauthorblockA{\textit{Department of Computer and Systems Sciences}, 
\textit{Stockholm University}, Stockholm 164 25, Sweden. \\
\{shubham.vaishnav, praveen.donta, sindri.magnusson\}@dsv.su.se}}
% \and
% \IEEEauthorblockN{2\textsuperscript{nd} Praveen Kumar Donta}
% \IEEEauthorblockA{\textit{Department
% of Computer and Systems Sciences} \\
% \textit{Stockholm University}, 
% Sweden \\
% praveen.donta@dsv.su.se}
% \linebreakand % <------------- \and with a line-break
% \IEEEauthorblockN{3\textsuperscript{rd} Sindri Magnússon}
% \IEEEauthorblockA{\textit{Department
% of Computer and Systems Sciences} \\
% \textit{Stockholm University}, 
% Sweden \\
% sindri.magnusson@dsv.su.se}

\maketitle

\begin{abstract}
Internet of Things (IoT) systems increasingly operate in environments where devices must respond in real time while managing fluctuating resource constraints, including energy and bandwidth. Yet, current approaches often fall short in addressing scenarios where operational constraints evolve over time. To address these limitations, we propose a novel Budgeted Multi-Armed Bandit framework tailored for IoT applications with dynamic operational limits. Our model introduces a decaying violation budget, which permits limited constraint violations early in the learning process and gradually enforces stricter compliance over time. We present the Budgeted Upper Confidence Bound (UCB) algorithm, which adaptively balances performance optimization and compliance with time-varying constraints. We provide theoretical guarantees showing that \texttt{Budgeted UCB} achieves sublinear regret and logarithmic constraint violations over the learning horizon. Extensive simulations in a wireless communication setting show that our approach achieves faster adaptation and better constraint satisfaction than standard online learning methods. These results highlight the framework’s potential for building adaptive, resource-aware IoT systems. %Extensive simulations in a wireless IoT communication scenario demonstrate that our method enables faster adaptation and more reliable constraint satisfaction compared to standard online learning approaches. These results underscore the framework’s potential for enabling adaptive, resource-efficient IoT systems.
\end{abstract}

\begin{IEEEkeywords}
 Online Learning, Multi-Armed Bandits, Upper Confidence Bound, Dynamic Constraints, Internet of Things
\end{IEEEkeywords}

\section{Introduction}
%Write introduction here.

The number of Internet of Things (IoT) devices connecting through wireless networks is steadily increasing, thus facing tougher decision-making challenges in environments that are constantly changing and hard to predict.
% The proliferation of Internet-of-Things (IoT) devices and wireless communication networks has led to increasingly complex decision-making environments, where systems must operate under dynamic and uncertain conditions. 
In applications such as adaptive rate control, edge computing, and network resource allocation, agents must continuously select actions that optimize a primary performance objective (e.g., throughput, latency, or reliability) while simultaneously adhering to dynamic operational constraints (e.g., energy consumption, interference levels, or bandwidth budgets) \cite{xian2025robust, 10742099}.

A key challenge in these settings is the need to make decisions sequentially over time without full knowledge of the underlying system dynamics. Moreover, environmental constraints, such as energy thresholds or communication budgets, can vary over time due to changing network conditions, user demands, or hardware limitations. These realities necessitate learning frameworks that are capable of optimizing for multiple objectives under uncertainty and adapting to dynamically evolving constraint conditions.

Motivated by these challenges, we study a constrained stochastic bandit model with two objectives: a reward signal capturing the primary performance metric, and a constraint signal capturing secondary operational requirements. At each time step, the agent receives a constraint threshold from the environment, selects an action from a finite set, and observes stochastic feedback on both the reward and constraint signals. The goal is to maximize cumulative reward while ensuring that constraint violations are kept within a dynamically shrinking budget over time. This model is highly relevant for IoT systems that tighten operational tolerances over time (e.g., battery-draining IoT devices).

Our model is particularly suited to IoT and wireless communication scenarios, where real-time decision-making must be robust to dynamic resource constraints and evolving system demands. In the following sections, we formalize the model, define the optimization objective, and discuss strategies for achieving robust online learning under dynamic constraints.

\subsection{Related Work}
The problem of decision-making under uncertainty with resource constraints has been extensively studied in the domains of online learning and wireless communication. Classical multi-armed bandit (MAB) frameworks focus on maximizing cumulative rewards under stochastic feedback, with notable algorithms including UCB (Upper Confidence Bound) and Thompson Sampling~\cite{auer2002finite}. However, traditional MAB models do not account for operational constraints, limiting their applicability in dynamic IoT environments where resource availability fluctuates over time.

Recent advances have explored \textit{constrained multi-armed bandits} (CMAB), where the agent must optimize rewards while satisfying fixed constraints. Pioneering works such as Badanidiyuru \textit{et al.}~\cite{badanidiyuru2018bandits} proposed two algorithms for the Bandits with Knapsacks problem - \texttt{BalancedExploration} and \texttt{PrimalDualBwk} - that achieve near-optimal regret under fixed, known resource constraints. However, their framework assumes static budgets and does not address scenarios with dynamically evolving constraints.

Another line of research investigates \textit{safe exploration} techniques, such as Safe-UCB by Sui \textit{et al.}~\cite{sui2015safe} and Moradipari \textit{et al.}~\cite{moradipari2020stage}. These approaches typically assume static constraints or conservative assumptions that limit reward optimization.

\textit{Multi-objective bandits} approaches, such as Drugan and Nowé~\cite{drugan2013designing} extend bandit models to handle multiple competing reward dimensions, optimizing for trade-offs via Pareto-based or scalarization methods. These methods treat all objectives symmetrically and do not differentiate between a primary performance objective (e.g., throughput) and secondary operational constraints (e.g., energy use). In contrast, a more recent work by Wang et al.~\cite{wang2025neural} integrates constrained optimization for safe online learning.

% In wireless communications, reinforcement learning techniques have been used for resource allocation and energy-efficient operations, such as those by Liu \textit{et al.}~\cite{liu2020energy} and Chen \textit{et al.}~\cite{chen2019wireless}.
Recent studies-such as Chen and Giannakis~\cite{chen2018bandit} for energy-critical IoT systems and Liu and Fang~\cite{10228901} for 6G IoT task scheduling—demonstrate the value of incorporating constraints into online learning and decision-making. While these models address important operational limits, they typically assume perfect or predictable knowledge of constraints and thus lack the flexibility needed to handle stochastic, time-varying budgets and requirements.
% Recent studies (e.g., Chen and Giannakis~\cite{chen2018bandit} and Liu and Fang~\cite{10228901}) highlight constrained learning for energy-critical and 6G IoT environments, further underscoring the need for adaptive constraint handling mechanisms. However, these models often assume perfect or predictable knowledge of environmental constraints and lack the flexibility required to adapt to \textit{stochastic and time-varying constraints and budgets}.

While Neely and Yu’s OCO algorithm \cite{neely2017online} develops a virtual-queue method for full-information convex optimization with time-varying constraints under i.i.d.\ assumptions, our work addresses the \emph{partial-information} MAB setting with stochastic reward and cost feedback. Cao and Liu \cite{cao2018time, cao2018online} study online convex optimization with time-varying constraints, addressing both full information and bandit feedback settings. Their algorithms achieve sublinear regret and constraint violation, assuming sublinear drift of the comparator sequence. 

Reinforcement learning is a promising approach for these dynamically changing and constrained 6G IoT environments, due to its ability to learn optimal policies while interacting with the environment \cite{Vaishnav2024}. We introduce a \emph{decaying violation budget} and propose a \texttt{Budgeted UCB} algorithm that allows controlled exploration-phase violations yet enforces vanishing constraint breaches over time, obtaining novel convergence guarantees in the dynamic-IoT bandit context. 

\subsection{Contributions}
This work introduces a novel stochastic bandit model and \texttt{budgeted UCB} algorithm designed for dynamic constraint satisfaction in real-time decision-making environments. In contrast to traditional constrained bandit formulations that focus on cumulative or static constraint management, we propose a per-round violation model with a dynamically shrinking budget, directly motivated by IoT and wireless communication applications where operational thresholds evolve over time. To the best of our knowledge, this is the first stochastic bandit model that explicitly:
\begin{itemize}
    \item Allows controlled constraint violations during initial learning phases, with a dynamically shrinking violation budget.
    \item Adapts exploration and exploitation strategies based on real-time constraint satisfaction metrics.
    \item Provides theoretical guarantees on both sublinear regret and logarithmic constraint violation rates in dynamically constrained environments.
    \item Models real-world IoT decision-making environments where operational thresholds are not static but evolve based on system states and external conditions.
\end{itemize}

Thus, our work bridges an important gap between theoretical constrained bandits and practical IoT applications requiring adaptive, resource-efficient learning mechanisms.

% \begin{itemize}
%     \item We formalize a new constrained bandit setting where the agent must maximize cumulative reward while ensuring that the empirical constraint violation rate remains below a time-dependent shrinking budget at each round.
%     \item We propose a general framework for modeling dynamic constraints through a violation budget that decays linearly with time, capturing the tightening operational requirements of real-world systems.
%     \item We provide a precise problem formulation that distinguishes between a reward signal to be maximized and a constraint signal to be controlled, reflecting the asymmetric objectives often present in IoT and wireless systems.
%     \item We position our framework relative to existing literature and identify it as the first to explicitly address dynamic, instantaneous constraint control within a stochastic bandit model.
% \end{itemize}

\section{Problem Formulation}

We consider a stochastic multi-armed bandit problem with dynamic constraints where an agent must simultaneously maximize a reward signal while adhering to evolving constraints over time. Such settings arise naturally in applications including, but not limited to, online resource allocation, recommendation systems, and energy management.

Let $\mathcal{A} = \{a_1, \dots, a_K\}$ denote a finite set of $K$ actions (arms). The agent interacts with the environment over a finite horizon of $T$ discrete time steps. At each time step $t = 1, \dots, T$:
\begin{enumerate}
    \item The environment issues a \emph{constraint threshold} $C_t \in \mathbb{R}$.
    \item The agent selects an \emph{action} $a_t \in \mathcal{A}$.
    \item The environment returns a \emph{stochastic feedback pair} $(r_t, c_t)$, where:
    \begin{itemize}
        \item $r_t \in \mathbb{R}$ is the \emph{reward signal},
        \item $c_t \in \mathbb{R}$ is the \emph{observed constraint signal}.
    \end{itemize}
\end{enumerate}
The feedback $(r_t, c_t)$ is drawn according to an action-dependent distribution $\mathcal{D}_{a_t}$, i.e., $
    (r_t, c_t) \sim \mathcal{D}_{a_t},$ where for each $i \in \{1,2\}$, the conditional expectations are given by
\[
    \mu_r(a)=\mathbb{E}[r_t \mid a_t = a] , \quad \mu_c(a)=\mathbb{E}[c_t \mid a_t = a] ,
\]
with $\mu_r(a)$ denoting the expected reward and $\mu_c(a)$ denoting the expected constraint feedback associated with action $a$.
Throughout, we distinguish between the \emph{issued constraint threshold} $C_t$ that is externally imposed at time $t$, and the \emph{observed constraint feedback} $c_t$ resulting from the agent's chosen action. 

The agent's objective is to select a sequence of actions $\{a_t\}_{t=1}^T$ to maximize the expected cumulative reward, while ensuring that the reward signal satisfies the dynamic constraint thresholds issued by the environment.
We consider the constraint to be satisfied at time $t$ if the expected reward of the chosen action meets or exceeds the issued threshold, i.e., $\mu(a_t) \geq c_t.$

To formalize constraint satisfaction, we introduce a dynamically shrinking per-round violation budget. Specifically, we define:
\begin{align}
    \delta_t &= \delta_0 \left( 1 - \frac{t-1}{T_{bud}} \right), \quad 0 < \delta_0 < 1, \label{eq:delta_t} \\
    I_t &= \mathbf{1}\{ c_{t} > C_t \}, \\
    v_t &= \frac{1}{t} \sum_{s=1}^t I_s, \label{eq:violation_rate}
\end{align}
where:
\begin{itemize}
    \item $\delta_t$ denotes the permissible violation rate (budget) at time $t$, starting from an initial allowance $\delta_0$ and decreasing linearly to zero over a duration of $T_{bud} \le T$,
    \item $I_t$ is the indicator variable that equals one if a constraint violation occurs at time $t$,
    \item $v_t$ is the empirical violation rate up to time $t$.
\end{itemize}

Thus, constraint satisfaction requires the agent to keep its cumulative violation rate $v_t$ below the shrinking budget $\delta_t$ at each time step, counting a violation whenever $c_t > C_t$.

%\subsection*{Optimization Objective}

The agent seeks to design a policy $\pi$ that selects actions $\{a_t\}_{t=1}^T$ so as to maximize the expected cumulative reward, while ensuring compliance with the violation budget. Formally, the objective is:
\begin{align}
    \max_{\pi} \quad & \mathbb{E}_\pi\left[ \sum_{t=1}^T r_{t} \right], \label{eq:obj_throughput} \\
    \text{subject to} \quad & \mathbb{E}_\pi[v_t] \leq \delta_t, \quad \forall t \in \{1, \dots, T\}, \label{eq:budget_constraint}
\end{align}
where the expectation is taken over the randomness of the policy $\pi$ and the environment.

The agent does not have prior knowledge of the expected reward functions $\mu_r$ and $\mu_c$, nor the distributions $\mathcal{D}_{a}$. The constraint thresholds $\{C_t\}_{t=1}^T$ are observed at the beginning of each round. The agent must learn from interaction with the environment and adapt its policy online.

\section{\texttt{Budgeted UCB} Algorithm}

    We now present the Budgeted Upper Confidence Bound (\texttt{Budgeted UCB}) algorithm, a method designed to address the exploration-exploitation trade-off under a budget constraint. \texttt{Budgeted UCB} extends the classical UCB strategy by accounting for both the expected rewards and the associated costs of actions. At each decision round, the algorithm selects the arm that optimizes a reward-to-cost adjusted upper confidence index, ensuring efficient budget utilization while maintaining strong performance guarantees.

%    In the following, we formally describe the \texttt{Budgeted UCB} algorithm, detail its index construction and selection rule, and discuss its key theoretical properties, including budget-aware regret bounds.

The \texttt{Budgeted UCB} algorithm initializes, for each arm \(a\), a play count \(N(a)=0\) and cumulative reward sums \(S_r(a)=S_c(a)=0\), respectively, for the reward signals $r_t$ and constrain signals $c_t$. At each round \(t\), it first observes the current constraint threshold \(C_t\). Then, for each arm $a\in \mathcal{A}$ it computes the two upper confidence bounds
\[
\mathrm{UCB}_i(a)
=
\frac{S_i(a)}{N(a)}
+
\sqrt{\frac{2\ln t}{N(a)}},
\qquad
i=r,c,
\]
treating the arms with \(N(a)=0\) as infinitely optimistic. Next, it updates the linearly decaying violation allowance
\[
\delta_t
=
\delta_0\Bigl(1 - \frac{t-1}{T_{bud}}\Bigr),
\]
where $T_{bud}$ is the duration for which the violation allowance budget is non-zero. The empirical violation rate is then computed as:
\[
v_t
=
\frac{1}{t-1}\sum_{s=1}^{t-1}\mathbf1\{c_{t} > C_t\}.
\]
If \(v_t \le \delta_t\), the algorithm remains in its “exploration” phase and selects the arm with the highest throughput UCB, \(\arg\max_a \mathrm{UCB}_r(a)\). Otherwise it enters “safety” mode: it forms the feasible set
\[
\mathcal{F}_t = \{\,a : \mathrm{UCB}_c(a)\le c_t\},
\]
and if \(\mathcal{F}_t\neq\emptyset\) selects \(\arg\max_{a\in\mathcal{F}_t}\mathrm{UCB}_r(a)\); if no arm looks safe, it picks \(\arg\min_a\mathrm{UCB}_c(a)\) to minimize further violations. Finally, after playing arm \(a_t\), it observes the rewards \((r_{t},c_{t})\), increments \(N(a_t)\), and updates \(S_r(a_t)\leftarrow S_r(a_t)+r_{t}\) \(S_c(a_t)\leftarrow S_c(a_t)+c_{t}\) before proceeding to the next round. %We illustrate the steps in Algorithm 1.

\begin{algorithm}[t] 
\caption{\texttt{Budgeted UCB} with Decaying Violation Budget}
\SetKwInOut{Input}{Input}\SetKwInOut{Output}{Output}
\Input{Arms $\mathcal{A}=\{a_1,\dots,a_K\}$, horizon $T$, initial violation rate $\delta_0$}
\Output{Action sequence $A_1,\dots,A_T$}
\BlankLine
\ForEach{$a\in\mathcal{A}$}{
  $N(a)\leftarrow 0$, \quad $S_1(a)\leftarrow 0$, \quad $S_2(a)\leftarrow 0$\;
}
\For{$t\leftarrow 1$ \KwTo $T$}{
  Observe constraint $C_t$\; \par
\ForEach{$a\in\mathcal{A}$}{
    \uIf{$N(a)>0$}{
      $\hat\mu_i(a)\leftarrow S_i(a)/N(a)~~~~~~~~~~~~~~~~~~~~\text{for }i=r,c$\;\par
      $\mathrm{UCB}_i(a)\leftarrow \hat\mu_i(a) + \sqrt{2\ln t \,/\, N(a)}~~\text{for }i=r,c$\;
    }
    \Else{
      $\mathrm{UCB}_r(a)\leftarrow +\infty$, \quad $\mathrm{UCB}_c(a)\leftarrow +\infty$\;
    }
  }
  $\;\delta_t \leftarrow \delta_0 \bigl(1 - (t-1)/T\bigr)$ \tcp*{decaying budget}
  $\;
  \text{Calculate $v_t$ using eq. \eqref{eq:violation_rate}}$\;\par
  \If{$v_t \le \delta_t$}{
    $a_t \leftarrow \arg\max_{a\in\mathcal{A}}\mathrm{UCB}_r(a)$ \tcp*{explore for throughput}
  }{
    $\mathcal{F}\leftarrow\{\,a:\mathrm{UCB}_c(a)\le c_t\}$\; \par
    \If{$\mathcal{F}\neq\emptyset$}{
      $a_t \leftarrow \arg\max_{a\in\mathcal{F}}\mathrm{UCB}_r(a)$ \tcp*{safe explore}
    }
    \Else{
      $a_t \leftarrow \arg\min_{a\in\mathcal{A}}\mathrm{UCB}_c(a)$ \tcp*{minimize violation}
    }
  }
  Play $a_t$, observe $(r_{t},c_{t})$\; \par
  $N(a_t)\!\leftarrow N(a_t)+1$, \quad $S_i(a_t)\!\leftarrow S_i(a_t)+r_{t,i}$\;\par
  % Record violation $\mathbb I_t = \mathbf1\{c_{t}>C_t\}$\;
}
\end{algorithm}

In summary, the \texttt{Budgeted UCB} algorithm maintains upper‐confidence bounds for both objectives.  Early on, it allows a fraction $\delta_t$ of violations (to encourage exploration of high‐throughput arms).  As $t$ increases, $\delta_t$ decays linearly to zero.  If the observed violation rate $v_t$ stays within budget, the algorithm purely maximizes throughput UCB.  Otherwise, it switches to a safety‐first policy: filter out arms whose energy‐UCB exceeds $c_t$, then pick the best throughput among safe arms, or if none, the arm least likely to violate. Assuming at least one safe arm always exists, the \texttt{Budgeted UCB} algorithm implicitly guarantees the solution of the optimization problem \ref{eq:obj_throughput} under the constraint \ref{eq:budget_constraint}.

\section{Theoretical Results}
We now analyze the performance of the \texttt{Budgeted UCB} algorithm. Specifically, we establish upper bounds on the regret with respect to the optimal budget-respecting policy. Our analysis shows that \texttt{Budgeted UCB} achieves sublinear regret while ensuring that constraint violations vanish asymptotically.

We first introduce the necessary notation and formalize the notion of regret in the budgeted setting. The cumulative regret and violations are defined as follow.
\[
R(T)
=
\sum_{t=1}^T \Delta_t,
\qquad
V(T)
=
\sum_{t=1}^T \mathbf{1}\{\mu_2(a_t)>C_t\}.
\]
We now establish our regret bounds.

\begin{theorem}[Regret and Violation Bounds]
With probability at least $1-1/T$, \texttt{Budgeted UCB} satisfies
\[
R(T)
= O\bigl(\sqrt{K\,T\ln T}\bigr),
\qquad
V(T)
= O(\ln T).
\]
\end{theorem}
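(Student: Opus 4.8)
The plan is to follow the classical optimism-under-uncertainty template, adapting it to the two-index, mode-switching structure of \texttt{Budgeted UCB}. First I would fix the standard boundedness assumption (rewards and costs in $[0,1]$) and build a \emph{good event} $\mathcal{E}$ on which every confidence bound is simultaneously valid: for all arms $a$, all rounds $t$, and all realized play counts, $|\hat\mu_i(a)-\mu_i(a)|\le\sqrt{2\ln t / N(a)}$ for $i\in\{r,c\}$. By Hoeffding's inequality the failure probability for a single estimate at confidence radius $\sqrt{2\ln t / N(a)}$ is $O(t^{-4})$, so a union bound over the $K$ arms, the $\le T$ rounds, and the at most $t$ possible values of $N(a)$ yields $\mathbb{P}(\mathcal{E})\ge 1-1/T$, which supplies exactly the high-probability qualifier in the statement. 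All subsequent arguments are carried out on $\mathcal{E}$.

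For the regret bound I would run the usual UCB counting argument on the reward index $\mathrm{UCB}_r$. Whenever a suboptimal arm $a$ with gap $\Delta_a=\mu_r(a^\star)-\mu_r(a)$ is chosen by a rule that maximizes $\mathrm{UCB}_r$ — that is, in the exploration branch or the safe-explore branch — optimism forces $N(a)=O(\ln T / \Delta_a^2)$, so its cumulative regret contribution is $O(\ln T / \Delta_a)$. Summing over arms gives the gap-dependent bound, and the standard gap-independent conversion (threshold the gaps at $\epsilon$, balance the two pieces, and optimize $\epsilon\sim\sqrt{K\ln T / T}$) produces $R(T)=O(\sqrt{KT\ln T})$. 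The one wrinkle is the \emph{min-cost} branch, where selection ignores the reward index entirely; I would argue separately that this branch is entered only $O(\ln T)$ times (see the next step) and hence adds at most an $O(\ln T)$ term that is absorbed into the main rate.

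The violation bound rests on the observation that, on $\mathcal{E}$, the safety filter is \emph{correct}: in the safe-explore branch the played arm satisfies $\mathrm{UCB}_c(a_t)\le C_t$, whence $\mu_c(a_t)\le\mathrm{UCB}_c(a_t)\le C_t$ and no true-mean violation occurs. Thus every counted violation must come either from the exploration branch or from the empty-feasible-set min-cost branch. Because at least one genuinely safe arm is assumed to exist, its cost confidence interval shrinks below $C_t$ after $O(\ln T)$ plays, after which the feasible set is nonempty; likewise an unsafe arm can survive in the reward-maximizing exploration branch only until its own cost interval has tightened, which again costs $O(\ln T)$ plays per arm. Aggregating over the $K$ arms bounds the number of violating rounds by $O(K\ln T)=O(\ln T)$, giving $V(T)=O(\ln T)$.

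The step I expect to be the genuine obstacle is the clean handling of the mode switching — in particular, showing that the \emph{observed}-feedback budget test $v_t\le\delta_t$ that the algorithm actually runs controls the \emph{true-mean} violation count $V(T)$ appearing in the theorem. This requires a second concentration step relating $\mathbf{1}\{c_t>C_t\}$ to $\mathbf{1}\{\mu_c(a_t)>C_t\}$ and verifying that the decaying allowance $\delta_t\to 0$ is compatible with only $O(\ln T)$ true violations, rather than forcing the argument to leak a polynomial number of budgeted exploration breaches. Bounding the interaction between the shrinking budget, the feasibility filter, and the optimistic reward selection in a single coherent accounting is where most of the care will be needed.
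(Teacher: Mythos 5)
Your overall skeleton matches the paper's: decompose into an exploration phase and a safety phase, invoke the classical Auer-style UCB counting argument for the reward regret in both branches, and use Hoeffding/confidence-interval correctness to show that the safety filter produces at most $O(\ln T)$ violations per arm, hence $O(K\ln T)=O(\ln T)$ overall. The good-event construction and the gap-dependent-to-gap-independent conversion are standard and consistent with what the paper (much more tersely) cites.

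The one place where your argument genuinely breaks is the accounting of violations incurred in the \emph{exploration} branch. You claim that ``an unsafe arm can survive in the reward-maximizing exploration branch only until its own cost interval has tightened,'' but that branch selects $\arg\max_a \mathrm{UCB}_r(a)$ with no cost filter whatsoever: an arm with the highest reward index and arbitrarily high cost will keep being selected there no matter how tight its cost confidence interval becomes, so no concentration argument removes it. The mechanism that actually caps these violations is the budget test itself: the algorithm stays in the exploration branch only while $v_t\le\delta_t$, and since $\delta_t=\delta_0(1-(t-1)/T_{bud})$ decays to zero after $T_{bud}$ rounds, the cumulative number of exploration-branch violations is bounded by roughly $T_{bud}\,\delta_0$, which the paper treats as a constant (this is exactly the $V_{\mathrm{exp}}(T)=O(T_{bud}\delta_0)$ term in its Part 1, absorbed into $O(\ln T)$ in Part 3). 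You correctly flag the observed-feedback-versus-true-mean mismatch as the main obstacle, but you leave it unresolved, whereas resolving it via the decaying budget is the one ingredient that makes the $V(T)=O(\ln T)$ claim go through; without it your argument leaks a potentially polynomial number of exploration-phase breaches. A secondary, smaller issue: your claim that the feasible set $\mathcal{F}_t$ becomes nonempty after $O(\ln T)$ rounds presumes the genuinely safe arm is played often enough for its cost interval to shrink, which the min-cost fallback does not automatically guarantee; the paper simply assumes a safe arm always exists and does not prove this either, so you are at least no less rigorous than the source on that point.
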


\begin{proof} We split the regret analysis into three parts: we first bound the regret incurred during rounds where the algorithm explores; second, we bound the regret from rounds where the algorithm plays safe actions; and third, we combine these two bounds to obtain the total regret.

\medskip\noindent\textbf{Part 1 - Exploration Regret Bound.}
As long as the observed violation rate
\[
v_t \;=\; \frac{\sum_{s=1}^{t-1}\mathbf1\{\hat\mu_c(a_s)>C_s\}}{\,t-1\,}
\;\le\;\delta_t,
\]
the algorithm ignores the constraint and plays $a_t = \arg\max_a \mathrm{UCB}_r(a).$

The exact regret and violation bounds for the exploration phase of \texttt{Budgeted UCB} are derived from the classical analysis of the Upper Confidence Bound (UCB) algorithm found in the foundational work of Auer et al. (2002)~\cite{auer2002finite}. Specifically, with probability $1-1/T$, the bound on regret in the exploration phase is given explicitly as $R_{exp}(T)$:
\[
R_{exp}(T) \leq 8 \sum_{a:\mu(a)<\mu^*} \frac{\ln T}{\Delta(a)} + \left(1+\frac{\pi^2}{3}\right)\sum_{a=1}^K\Delta(a),
\]
Simplifying this, we get:
\begin{align}
R_{\mathrm{exp}}(T)
&= O\bigl(\sqrt{K\,T\ln T}\bigr).
\end{align}

During the exploration phase, violations are explicitly allowed by the algorithmic design, controlled by the initial violation budget parameter $\delta_0$:
\begin{align}
    V_{\mathrm{exp}}(T) = O(T_{bud} \delta_0).
\end{align}
Note that the violation budget, $\delta_t$, reduces to $0$ after a constant $T_{bud}$ exploration period.

\medskip\noindent\textbf{Part 2 - Safety Regret Bound.}
Once $v_t > \delta_t$, the algorithm restricts to $\mathcal{F}_t=\bigl\{a: \mathrm{UCB}_c(a)\le c_t \bigr\}$.
If $\mathcal{F}_t\neq\emptyset$, it picks
\[
a_t = \arg\max_{a\in\mathcal{F}_t} \mathrm{UCB}_r(a),
\]
else it chooses $a_t = \arg\min_a \mathrm{UCB}_c(a).$
% \[
% a_t = \arg\min_a \mathrm{UCB}_c(a).
% \]

Violations in this phase occur only if the algorithm mistakenly believes that an arm's energy consumption is below the constraint when it is actually above it. Mathematically, a violation in this phase can only occur if
\(\mu_c(a_t)>c_t\) while \(\mathrm{UCB}_c(a_t)\le c_t\).  According to Hoeffding’s inequality, which gives the upper bounds for such randomness, each arm contributes at most a logarithmic number of violations ($O\!\Bigl(\ln T\Bigr)$). Summing over all $K$ arms gives
\[
V_{\mathrm{safe}}(T)
= O\bigl(K\ln T\bigr)
= O(\ln T).
\]
Meanwhile, because the best feasible arm $a^*_t$ is always in $\mathcal{F}_t$ (except on these $O(\ln T)$ errors), the regret in this phase also satisfies
\[
R_{\mathrm{safe}}(T)
= O\bigl(\sqrt{K\,T\ln T}\bigr).
\]

\medskip\noindent\textbf{Part 3 - Total Regret Bound.}
With probability at least $1-1/T$,
\begin{align*}
R(T)
&= R_{\mathrm{exp}}(T) + R_{\mathrm{safe}}(T)
= O\bigl(\sqrt{K\,T\ln T}\bigr),
\\
V(T)
&= V_{\mathrm{exp}}(T) + V_{\mathrm{safe}}(T)
= O(T_{bud} \delta_0 + \ln T) = O(\ln T).
\end{align*}
\end{proof}
To understand the implications of the bound, we note that the influence of the term $T_{bud} \delta_0$ on $V(T)$ shrinks to zero as the exploration ends. Thus, the violations quickly stabilize to only logarithmic growth. Moreover, since $R(T)=o(T)$ and $V(T)=o(T)$, we have
\[
\frac{R(T)}{T}\;\to\;0,
\qquad
\frac{V(T)}{T}\;\to\;0,
\]
i.e.\ both the average regret and violation rates vanish as $T\to\infty$. Thus, the \texttt{Budgeted UCB} algorithm achieves the optimal $O(\sqrt{T\ln T})$ throughput regret of standard UCB while incurring only $O(\ln T)$ constraint violations, and guarantees both average regret and violation rate converge to zero.

\section{Experimental Results}

\subsection{Simulation Setup and Baselines}

\begin{figure}[b]
    \begin{subfigure}[b]{0.23\textwidth} 
         \centering \includegraphics[width=\textwidth]{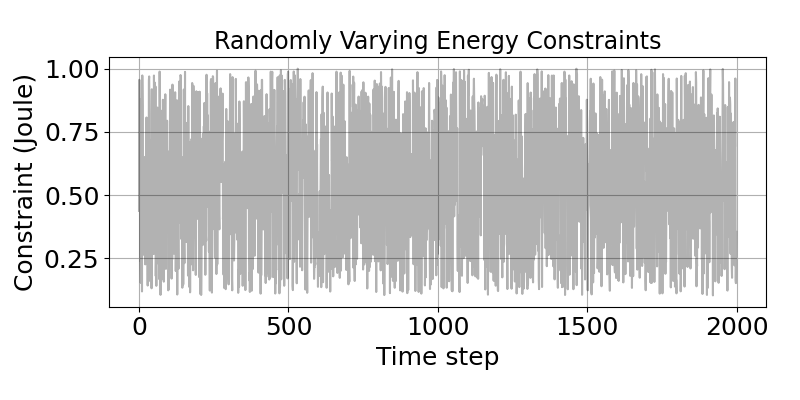}
    \caption{Exp. 1: Randomly varying energy constraints}
  \end{subfigure}
  \hfill
  \begin{subfigure}[b]{0.23\textwidth} 
         \centering
\includegraphics[width=\textwidth]{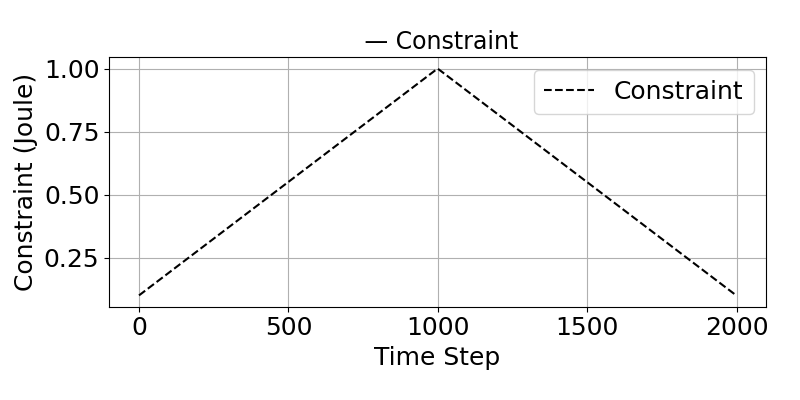}
    \caption{Exp. 2: Linearly varying energy constraints}
  \end{subfigure}
  \caption{Energy Constraint variations in the two experiments}
  \label{fig: constraints}
\end{figure}

We consider a scenario of an IoT device, a battery operated transmitter which sends data over a wireless channel to a fixed receiver located \(d=10\)m away. Our aim is to maximize the \emph{cumulative throughput} $\sum_{t=1}^T r_t$ 
over a horizon of \(T=2000\) time steps, while keeping the \emph{empirical energy‐constraint violation rate $v_t$ below a linearly decaying budget $\delta_t$}: 
\[
v_t \;=\; \frac{1}{t}\sum_{s=1}^t \mathbf1\{c_s > C_s\},
\qquad
\delta_t \;=\; \delta_0 \Bigl(1 - \tfrac{t-1}{T}\Bigr)
\]
% below a linearly decaying budget 
% \[
% \delta_t \;=\; \delta_0 \Bigl(1 - \tfrac{t-1}{T}\Bigr),
% \]
where \(\delta_0=0.5\). At each round an energy constraint \(C_t\) is imposed, as shown in Figure \ref{fig: constraints}. We conduct two experiments: in Experiment 1, \(C_t\) is drawn uniformly in \([P_{\min},P_{\max}]\), and in Experiment 2 drifting linearly down from \(P_{\max}\) to \(P_{\min}\) then back up.  Our \texttt{Budgeted UCB} algorithm uses \(\delta_t\) to decide when to explore purely for throughput versus when to filter arms by their cost‐UCB to avoid violations.

The wireless communication channel has bandwidth \(B=1\)MHz and experiences path‐loss \(g=d^{-\alpha}\) with exponent \(\alpha=3\), plus additive white Gaussian noise of spectral density \(N_0=10^{-9}\,\mathrm{W/Hz}\). To adapt to changing energy constraints, the transmitter may choose at each time \(t\) one of \(K\) discrete power levels $P(a)\;\in\;\{P_{\min},\,P_{\min}+\Delta P,\;\dots,\;P_{\max}\}, $
where \(P_{\min}=0.1\)\,W, \(P_{\max}=1.0\)\,W and \(\Delta P=(P_{\max}-P_{\min})/(K-1)\).  Transmitting at power \(P(a_t)\) yields an instantaneous data rate (throughput) and energy cost
\[
r_t \;=\; B\log_2\!\Bigl(1 + \tfrac{P(a_t)\,g}{N_0\,B}\Bigr),
\qquad
e_t \;=\; P(a_t).
\]
% and consumes energy cost
% \[
% c_t \;=\; P(a_t).
% \]

We compare the performance of the proposed algorithm with three \textbf{baseline policies} over $T=2000$ steps in a wireless link simulation with randomly varying energy constraints:
\begin{itemize}
  \item \textbf{\texttt{Unconstrained UCB (u1):}} Classic UCB algorithm applied solely to throughput.
  \item \textbf{\texttt{Thompson Sampling (ts):}} Samples the throughput of each arm from a Gaussian posterior and selects the arm with the highest sampled value.
  \item \textbf{\texttt{Epsilon–Greedy (eg):}} With probability $\varepsilon$ explores uniformly at random, otherwise it exploits the empirically best‐throughput arm.
  \item \textbf{\texttt{Virtual Queue (vq) \cite{neely2017online}:}} Implements the method from “Time‐Varying Constraints and Bandit Feedback in Online Convex Optimization,” using a virtual queue to enforce dynamic constraints by penalizing arms according to accumulated violations.

\end{itemize}

The experiments were conducted on a machine equipped with an 11th Gen Intel(R) Core(TM) i7-1185G7 CPU @ 3.00GHz, 32~GB RAM, and running Windows. Each experiment was repeated $5$ times, and the results were averaged to ensure statistical robustness. Key simulation parameters are summarized in Table~\ref{Tab:sim_params}.

% We simulate a wireless IoT communication scenario involving a single transmitter-receiver pair separated by a fixed distance of $10$ meters. The transmitter dynamically selects transmit power levels (arms) to maximize data throughput while respecting randomly varying energy constraints. The communication channel has a bandwidth of $1$ MHz, with additive white Gaussian noise characterized by a noise spectral density of $N_0 = 10^{-9}$ W/Hz. The wireless link experiences path-loss with exponent $\alpha = 3$, and transmit power levels are discretized into $11$ evenly spaced values ranging from $0.1$~W to $1.0$~W. The total simulation horizon comprises $T=2000$ time steps. At each time step, the energy constraint is sampled from a uniform distribution within the feasible range set by the minimum and maximum transmit power levels. We conduct two experiments with different patterns for the variation of energy-sontraints, as shown in Figure \ref{fig: constraints}. Each experiment was repeated $5$ times, and the results were averaged to ensure statistical robustness. Key simulation parameters are summarized in Table~\ref{Tab:sim_params}.

\begin{table}[h!]
\centering
\caption{Simulation Parameters}
\label{Tab:sim_params}
\begin{tabular}{|p{2.55cm}|p{1.25cm}|p{2.35cm}|p{1.2cm}|}
 \hline
 \multicolumn{4}{|c|}{\textbf{Simulation Parameters}} \\
 \hline
 \textbf{Parameter} & \textbf{Value} & \textbf{Parameter} & \textbf{Value} \\
 \hline
 Horizon $T$ & $2000$ steps & Distance $d$ & $10$ m \\
 Bandwidth $B$ & $1$ MHz & Noise Density $N_0$ & $10^{-9}$ W/Hz \\
 Path-loss exponent $\alpha$ & $3$ & Number of Arms & $11$ \\
 Min. Power $P_{\text{min}}$ & $0.1$ W & Max. Power $P_{\text{max}}$ & $1.0$ W \\
 \hline
\end{tabular}
\end{table}

\begin{figure*}[!t]
  \centering
  \begin{subfigure}{0.32\textwidth}
    \includegraphics[width=\linewidth]{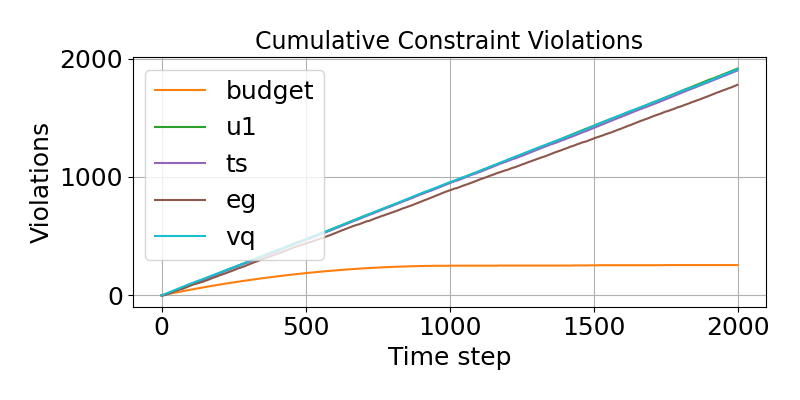}
    \caption{Cumulative constraint violations}
  \end{subfigure}
  \begin{subfigure}{0.32\textwidth}
    \includegraphics[width=\linewidth]{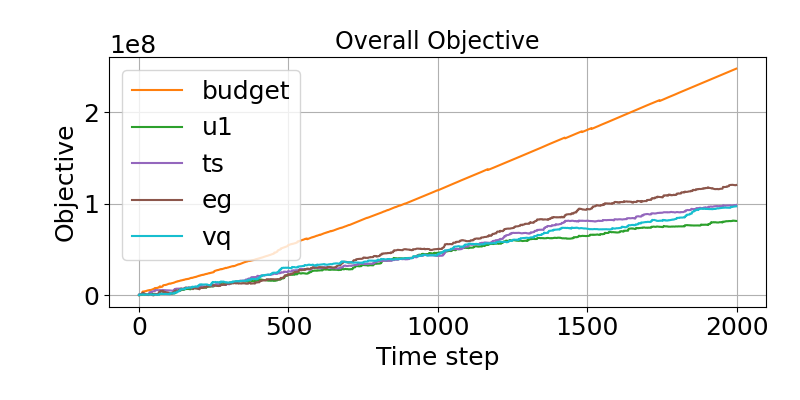}
    \caption{Overall objective ($\Lambda=10^6$)}
  \end{subfigure}
  \begin{subfigure}{0.32\textwidth}
    \includegraphics[width=\linewidth]{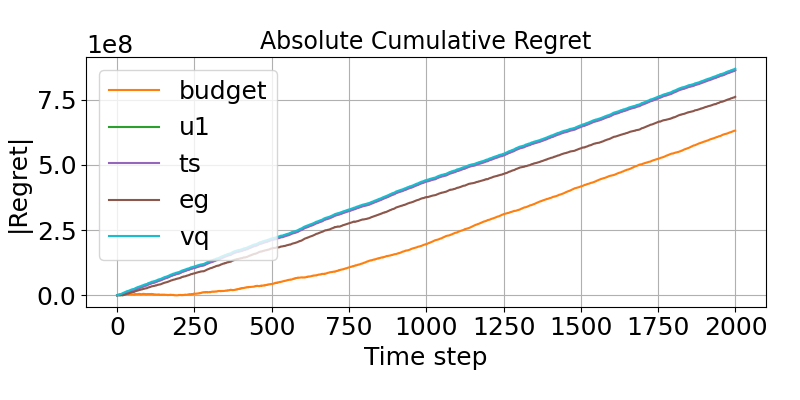}
    \caption{Cumulative absolute regret on throughput}
  \end{subfigure}
  \caption{Performance evaluation under randomly varying energy constraints}
\end{figure*}

\begin{figure*}[!t]
  \centering
  \begin{subfigure}{0.32\textwidth}
    \includegraphics[width=\linewidth]{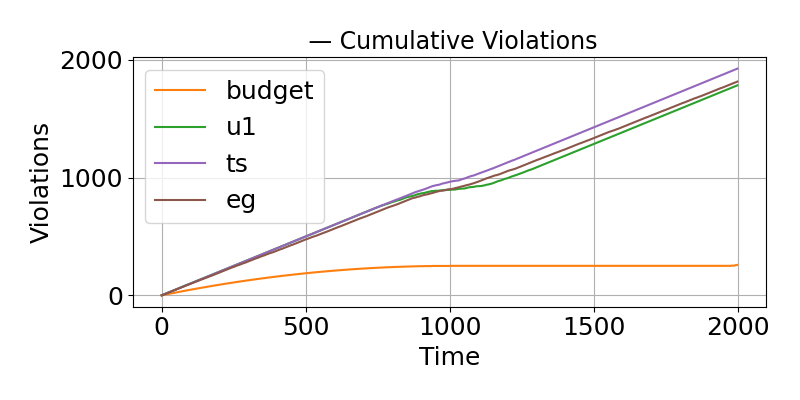}
    \caption{Cumulative constraint violations}
  \end{subfigure}
  \begin{subfigure}{0.32\textwidth}
    \includegraphics[width=\linewidth]{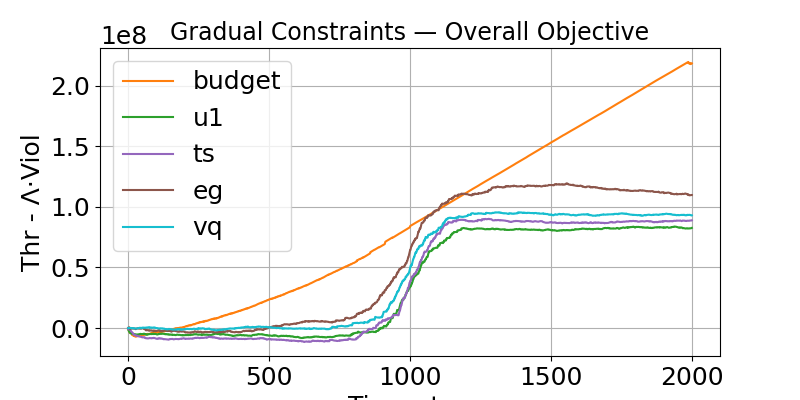}
    \caption{Overall objective ($\Lambda=10^6$)}
  \end{subfigure}
  \begin{subfigure}{0.32\textwidth}
    \includegraphics[width=\linewidth]{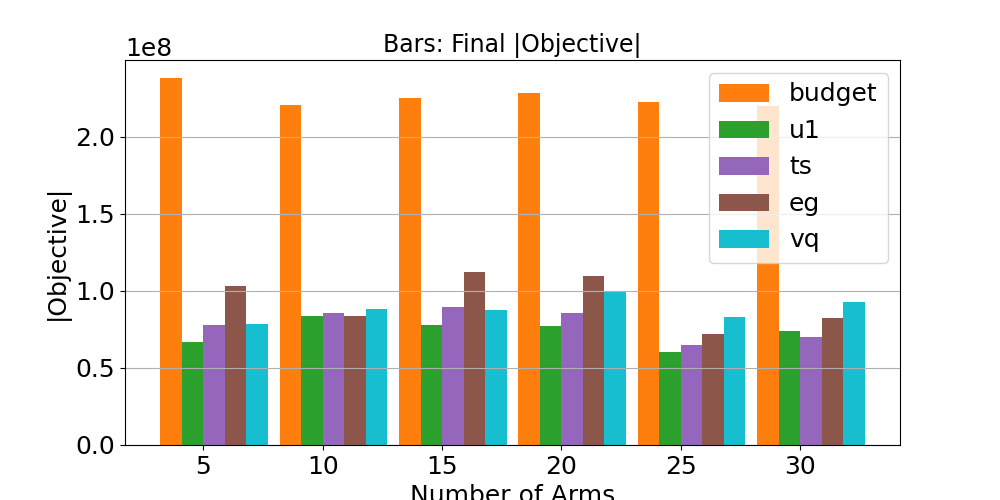}
    \caption{Scalability (Overall Objective)}
  \end{subfigure}
  \caption{Performance evaluation under linearly varying energy constraints}
\end{figure*}

% As explained earlier, the proposed \texttt{Budgeted UCB} algorithm gradually decays its allowed violation rate and then enforces the energy constraint via UCB filtering while maximizing throughput.
% We compare the performance of the proposed algorithm with three \textbf{baseline policies} over $T=2000$ steps in a wireless link simulation with randomly varying energy constraints:
% \begin{itemize}
%   \item \textbf{\texttt{Unconstrained UCB (u1):}} Classic UCB algorithm applied solely to throughput.
%   \item \textbf{\texttt{Thompson Sampling (ts):}} Samples the throughput of each arm from a Gaussian posterior and selects the arm with the highest sampled value.
%   \item \textbf{\texttt{Epsilon–Greedy (eg):}} With probability $\varepsilon$ explores uniformly at random, otherwise it exploits the empirically best‐throughput arm.
% \end{itemize}
% All except \textbf{\texttt{Budgeted UCB}} ignore the energy constraint, resulting in linearly growing constraint violations.

\subsection{Experiment 1: Randomly Varying Energy Constraints}

\subsubsection{Cumulative Constraint Violations}
Under randomly varying energy limits as shown in Figure 2(a), \texttt{Budgeted UCB} (orange) confines its total violations to grow only logarithmically, in accordance with the decaying violation budget.  The three unconstrained baselines—\texttt{u1}, \texttt{ts}, and \texttt{eg}—all rapidly converge to a single high‐throughput arm and thereafter violate the energy cap almost every round.  This illustrates that without any safety mechanism, they pay the full penalty whenever that arm’s energy draw exceeds the instantaneous constraint. Because the \texttt{vq} method doesn’t impose a hard constraint, it keeps choosing high-energy arms even when over budget, so its total violations far exceed those of \texttt{Budgeted UCB}.

\subsubsection{Overall Objective \(\bigl(\Lambda=10^6\bigr)\).}
% We plot the values of the overall objective (reward) encompassing both the throughput and a penalty $\Lambda$ on constraint violations: $\sum_{s=1}^t r_s \;-\; \Lambda \sum_{s=1}^t \mathbf1\{c_s > C_s\}$. Clearly \texttt{Budgeted UCB} quickly outpaces all baselines as dipicted in Figure 2(b).  By strictly limiting violations, it preserves nearly all of its raw throughput, so its net reward climbs almost uninterrupted.  The unconstrained algorithms start with high raw throughput but immediately incur massive penalties for energy constraint violations. Once they identify the top arm, they repeatedly overshoot and lower their net objective.  Even when later pulls occasionally satisfy the cap, they never recover to match the budget‐aware policy. \texttt{vq} only adjusts a single virtual backlog rather than enforcing a hard safety threshold; it continues picking high-power arms even when over budget, leading to a steep early spike in violations.
We plot the overall objective (reward), combining throughput and a penalty \(\Lambda\) on constraint violations:
$\sum_{s=1}^t r_s \;-\;\Lambda \sum_{s=1}^t \mathbf{1}\{c_s > C_s\}.$
As shown in Figure~2(b), \texttt{Budgeted UCB} quickly outpaces all baselines. By strictly limiting violations, it preserves nearly all throughput, so its net reward climbs uninterrupted. The unconstrained algorithms start with high raw throughput but immediately incur massive penalties for breaches; once they identify the top arm, they repeatedly overshoot and depress their net objective. Even occasional later pulls that satisfy the cap never recover to match the budget-aware policy. Meanwhile, \texttt{vq} adjusts only a virtual backlog rather than enforcing a hard safety cutoff, so it continues selecting high-power arms over budget, leading to a steep early spike in violations.

\subsubsection{Cumulative Absolute Throughput Regret.}
Figure 2(c) shows $\bigl|\sum_{s=1}^t \bigl(\mu^\star_s - r_s\bigr)\bigr|,$
the absolute deviation of the throughput from the clairvoyant constrained optimum.  \texttt{Budgeted UCB} achieves a moderate, sublinear increase: after an initial exploration period and occasional remapping around the violation threshold it rapidly settles on the best feasible arm.  The baselines incur larger regret growth—both due to more aggressive exploration (large one-step errors) and because repeated energy cap violations force them away from the true constrained optimum.

\subsection{Experiment 2: Linearly Varying Energy Constraints}

\subsubsection{Cumulative Constraint Violations.}
As shown in Figure 3(a), when the energy cap drifts linearly down and then back up, \texttt{Budgeted UCB} again tracks the shrinking allowance \(\delta_t\), producing a gently rising curve that plateaus during the tightest segment.  The other baselines violate on nearly all timesteps, regardless of the slowly changing threshold. Their violation curves remain essentially straight lines, showing that without appropriate constraint handling for decaying budgets, their performance is less efficient.

\subsubsection{Overall Objective \(\bigl(\Lambda=10^6\bigr)\)}
During the constraint tightening phase, as shown in Figure 3(b), the total rewards of all methods slowly climb - or even drop - because violations are expensive.  \texttt{Budgeted UCB} nonetheless secures a steadily positive slope by trading a small instantaneous throughput loss for far fewer penalties.  Once the cap relaxes, its net reward sharply accelerates as it switches smoothly to higher‐power arms without incurring new violations.  The other baselines exhibit a modest rebound but remain far behind, their early penalties leaving them unable to catch up.

\subsubsection{Scalability Study (Overall Objective)}
The grouped‐bar chart of final overall objective values, plotted against the number of arms \(K\in\{5,10,15,20,25,30\}\), shows that \texttt{Budgeted UCB} (orange) maintains the highest net reward regardless of how many power levels are available, as dipicted in Figure 3(c).  Its bars remain tall and nearly constant as \(K\) grows, demonstrating that the algorithm’s violation‐aware filtering scales gracefully. Even as the action set expands, it quickly zeroes in on the best feasible arm and avoids costly overshoots.  In contrast, the baselines exhibit much lower net objective values that barely improve (and in some cases slightly degrade) when \(K\) increases.  This flat or falling trend reflects that, without appropriate constraint handling for decaying budgets, adding more arms simply prolongs their exploration of high‐power options and incurs even more penalties, so they cannot realize additional throughput gains at scale.  

\section{Conclusions}

In this work, we introduced a novel stochastic bandit model
designed for dynamic and resource-sensitive environments under uncertain characteristic of modern IoT systems. By proposing the \texttt{Budgeted UCB} algorithm, which incorporates a decaying violation budget, we enable principled trade-offs between reward optimization and constraint satisfaction over time. Our theoretical analysis guarantees sublinear regret and logarithmic constraint violations, and our extensive simulations validate that \texttt{Budgeted UCB} significantly outperforms classical baselines in terms of adaptation speed, reliability, and scalability under diverse energy constraint variations. These results demonstrate that the proposed approach effectively balances exploration and safety in evolving environments. The framework opens up promising avenues for future work, including extensions to non-stationary environments, multi-agent settings, and integration with deep learning architectures for complex, high-dimensional IoT applications.

% \section*{Acknowledgment}

\bibliographystyle{IEEEtran}
\bibliography{references}

\end{document}